\newcommand{\N}{\mathcal{N}}
\renewcommand{\L}{\mathcal{L}}
\newcommand{\D}{\mathcal{D}}
\newcommand{\w}{\mathbf{w}}
\newcommand{\x}{\mathbf{x}}
\newcommand{\y}{\mathbf{y}}
\newcommand{\z}{\mathbf{z}}
\DeclareMathOperator{\TC}{TC}
\title{ Information Dropout: Learning Optimal Representations Through Noisy Computation\thanks{Dedicated to Naftali Tishby in the occasion of the conference \emph{Information, Control and Learning} held in his honor in Jerusalem, September 26-28, 2016. Registered as Tech Report UCLA-CSD160009 and arXiv:1611.01353 on November 6, 2016} }
\author{\IEEEauthorblockN{Alessandro Achille and Stefano Soatto}\\
\IEEEauthorblockA{Department of Computer Science\\
University of California, Los Angeles\\
405 Hilgard Ave, Los Angeles, 90095, CA, USA\\
Email: \{achille,soatto\}@cs.ucla.edu}}
\begin{document}

\maketitle

\begin{abstract}

The cross-entropy loss commonly used in deep learning is closely related to the defining properties of optimal representations, but does not enforce some of the key properties.
We show that this can be solved by adding a regularization term, which is in turn related to injecting multiplicative noise in the activations of a Deep Neural Network, a special case of which is the common practice of dropout. We show that our regularized loss function can be efficiently minimized using Information Dropout, a generalization of dropout rooted in information theoretic principles that automatically adapts to the data and can better exploit architectures of limited capacity.
When the task is the reconstruction of the input, we show that 
our loss function yields a Variational Autoencoder as a special case,
thus providing a link between representation learning, information theory and variational inference. Finally, we prove that we can promote the creation of disentangled representations simply by enforcing a factorized prior,
a fact that has been observed empirically in recent work.
Our experiments validate the theoretical intuitions behind our method,
and we find that information dropout
achieves a comparable or better generalization performance than binary dropout,
especially on smaller
models, since it can automatically adapt the noise to the structure of the network, as well as to the test sample.

\end{abstract}

\section{Introduction}

We call ``representation'' any function of the data that is useful for a task. An optimal representation is most useful (sufficient),  parsimonious (minimal), and minimally affected by nuisance factors (invariant). Do deep neural networks approximate such {\em sufficient invariants}? 

The cross-entropy loss most commonly used in deep learning does indeed enforce the creation of sufficient representations, but the other defining properties of optimal representations do not seem to be explicitly enforced by the commonly used training procedures.
However, we show that this can
be done by adding a regularizer, which is related to the injection of multiplicative noise in the activations, with the surprising result that \emph{noisy computation facilitates the approximation of optimal representations}.
In this paper we establish connections between the theory of optimal representations for classification tasks, variational inference, dropout and disentangling in deep neural networks. Our contributions can be summarized in the following steps:

\begin{enumerate}[(1)]

\item We define optimal representations using established principles of statistical decision and information theory: sufficiency, minimality, invariance (cf. \cite{soatto2016visual,tishby2000information}) (\Cref{sec:previous-work}).

\item We relate the defining properties of optimal representations for classification to the loss function most commonly used in deep learning, but with an added regularizer (\Cref{sect-optimal}, \cref{eq:empirical-IB}).

\item We show that, counter-intuitively, injecting multiplicative noise to the computation improves the properties of a representation and results in better approximation of an optimal one (\Cref{sect-information}).

\item We relate such a multiplicative noise to the regularizer, and show that in the special case of Bernoulli noise, regularization reduces to dropout \cite{srivastava2014dropout}, thus establishing a connection to information theoretic principles. We also provide a more efficient alternative, called Information Dropout, that makes better use of limited capacity, adapts to the data, and is related to Variational Dropout \cite{kingma2015variational} (\Cref{sect-information}).

\item We show that, when the task is reconstruction, the procedure above yields a generalization of the Variational Autoencoder, which is instead derived from a Bayesian inference perspective \cite{kingma2013auto}. This establishes a connection between information theoretic and Bayesian representations, where the former explains the use of a multiplier used in practice but unexplained by Bayesian theory (\Cref{sec:vae}).

\item We show that ``disentanglement of the hidden causes,'' an often-cited but seldom formalized desideratum for deep networks, can be achieved by
assuming a factorized prior
for the components of the optimal representation. Specifically, we prove that computing the
regularizer term under the simplifying assumption of an independent prior has the effect of  minimizing the total correlation of the components,
a phenomenon previously observed empirically by \cite{higgins17beta} (\Cref{sec:disentanglement}).

\item We validate the theory with several experiments including: improved insensitivity/invariance to nuisance factors using Information Dropout using (a) Cluttered MNIST \cite{mnih2014recurrent} and (b) MNIST+CIFAR, a newly introduced dataset to test sensitivity to occlusion phenomena critical in Vision applications; (c) we show improved efficiency of Information Dropout compared to regular dropout for limited capacity networks, (d) we show that Information Dropout favors disentangled representations; (e) we show that Information Dropout adapts to the data and allows different amounts of information to flow between different layers in a deep network (\Cref{sect-experiments}).

\end{enumerate}
In the next section we introduce the basic formalism to make the above statements more precise, which we do in subsequent sections.

\section{Preliminaries}
\label{sect-preliminaries}

In the general supervised setting,  we want to learn
the conditional distribution $p(\y|\x)$ of some random variable $\y$,
which we refer to as the \emph{task},
given (samples of the) input data $\x$.
In typical applications, $\x$ is often high dimensional
(for example an image or a video), while $\y$ is low dimensional,
such as a label or a coarsely-quantized location.
In such cases, a large part of the variability in $\x$
is actually due to \emph{nuisance factors} that affect
the data, but are otherwise irrelevant for the task \cite{soatto2016visual}.
Since by definition these nuisance factors are not predictive of the task,
they should be disregarded during the inference process.
However, it often happens that modern machine learning algorithms,
in part due to their high flexibility, 
will fit spurious correlations, present in the training data,
between the nuisances and the task, 
thus leading to poor generalization performance.

In view of this, \cite{tishby2015deep}  argue that the success of deep learning 
is in part due to the capability of neural networks to build
incrementally better representations that expose
the relevant variability, while at the same time discarding nuisances.
This interpretation is intriguing, as it establishes a connection
between machine learning, probabilistic inference, and information theory.
However, common training practice does not seem to stem from this insight, and indeed deep networks may maintain even in the top layers dependencies on easily ignorable nuisances (see for example \Cref{fig:nuisances}).

To bring the practice in line with the theory, and to better understand these connections,
we introduce a modified cost function, that can be seen as an approximation of
the Information Bottleneck Lagrangian of \cite{tishby2000information}, which encourages
the creation of representations of the data which are increasingly disentangled and
insensitive to the action of nuisances,
and we show that this loss can be minimized using a new layer, which we call
\emph{Information Dropout},
that allows the network to selectively introduce multiplicative noise in the layer activations,
and thus to control the flow of information.
As we show in various experiments, this method improves the generalization performance
by building better representations and preventing overfitting,
and it considerably improves over binary dropout on smaller models, since, unlike dropout, Information Dropout also adapts the noise to the structure of the network and to the individual sample at test time.

Apart from the practical interest of Information Dropout, one of our main results
is that Information Dropout can be seen as a generalization to
several existing dropout methods, providing a unified framework to analyze them,
together with some additional insights on empirical results.
As we discuss in \Cref{sec:previous-work},
the introduction of noise to prevent overfitting has already been studied from
several points of view. For example the original formulation of dropout of \cite{srivastava2014dropout},
which introduces binary multiplicative noise, was motivated as a way of efficiently
training an ensemble of exponentially many networks, that would be averaged at testing time.
\cite{kingma2015variational} introduce \emph{Variational Dropout}, 
a dropout method which closely resemble ours,
and is instead derived from a Bayesian analysis
of neural networks.
Information Dropout gives an alternative information-theoretic interpretation of those methods.

As we show in \Cref{sec:vae}, other than being very closely related to Variational Dropout,
Information Dropout directly yields a variational autoencoder as a special case
when the task is the reconstruction of the input.
This result is in part expected, since our loss function seeks an optimal
representation of the input for the task of reconstruction, and the representation given by the latent
variables of a variational autoencoder fits the criteria.
However, it still rises the question of exactly what and how deep are the links between
information theory, representation learning, variational inference and nuisance invariance.
This work can be seen as a small step in answering this question.

\section{Related work}
\label{sec:previous-work}

The main contribution of our work is to establish how two seemingly
different areas of research, namely dropout methods
to prevent overfitting, and the study of optimal representations,
can be linked through the Information Bottleneck principle.

Dropout was introduced by Srivastava et al. \cite{srivastava2014dropout}.
The original motivation was that by randomly dropping the activations during training,
we can effectively train an ensemble of exponentially many networks,
that are then averaged during testing, therefore reducing overfitting.
Wang~et~al.~\cite{wang2013fast} suggested that dropout could be seen as performing
a Monte-Carlo approximation of an implicit loss function, and that instead
of multiplying the activations by binary noise, like in the original dropout,
multiplicative Gaussian noise with mean 1 can be used
as a way of better approximating the implicit loss function.
This led to a comparable performance but faster training than binary dropout.

Kingma~et~al.~\cite{kingma2015variational} take a similar view of dropout as introducing
multiplicative (Gaussian) noise, but instead study the problem from a Bayesian point of view.
In this setting, given a training dataset $\D=\set{(\x_i,\y_i)}_{i=1,\ldots,N}$ and a prior distribution $p(\w)$, we want to compute the posterior
distribution $p(\w|\D)$ of the weights $\w$ of the network.
As is customary in variational inference, the true posterior can be approximated by minimizing the negative variational lower bound
$\L(\theta)$ of the marginal log-likelihood of the data,
{
\begin{multline}
    \L(\theta) = \frac{1}{N} \sum_{i=1}^N \E_{\w \sim p_\theta(\w|\D)}[-\log p(\y_i|\x_i,\w)] + \\ + \frac{1}{N}\KL{p_\theta(\w|\D)}{p(\w)}.
\end{multline}
\label{eq:var-dropout}
}
This minimization is difficult to perform, since it requires to repeatedly sample new weights for
each sample of the dataset. As an alternative, \cite{kingma2015variational} suggest
that the uncertainty about the weights that is expressed by the posterior 
distribution $p_\theta(\w|\D)$ can equivalently be encoded as a multiplicative noise
in the activations of the layers (the so called \emph{local reparametrization trick}).
As we will see in the following sections, this loss function closely resemble the one 
of Information Dropout, which however is derived from a purely 
information theoretic argument based on the Information Bottleneck principle.
One difference is that we allow the parameters of the noise to change on a per-sample basis 
(which, as we show in the experiments, can be useful to deal with nuisances),
and that we allow a scaling constant $\beta$ in front of the KL-divergence term,
which can be changed freely. 
Interestingly, even if the Bayesian derivation does not allow a rescaling of the KL-divergence,
\cite{kingma2015variational} notice that choosing a different scale for the KL-divergence term can indeed lead to 
improvements in practice. A related method, but derived from an information theoretic perspective was also suggested previously by \cite{hinton1993keeping}.

The interpretation of deep neural network as a way of creating successively better representations of the data has 
already been suggested and explored by many.
Most recently, Tishby et al. \cite{tishby2015deep} put forth
an interpretation of deep neural networks as creating sufficient representations of the data that 
are increasingly minimal.
In parallel simultaneous work, \cite{alemi2016deep} 
approximate the information bottleneck similarly to us, but focus on empirical analysis of robustness to adversarial perturbations rather than tackling disentanglement, invariance and minimality analytically.
Some have focused on creating representations that are \emph{maximally} invariant to nuisances, especially when they have the structure of a (possibly infinite-dimensional) group acting on the data,
like \cite{sundaramoorthiPVS09}, 
or, when the nuisance is a locally compact group acting on each layer, by successive approximations implemented by 
hierarchical convolutional architectures, like \cite{anselmi2016invariance} and \cite{bruna2011classification}.
In these cases, which cover common nuisances such as translations and
rotations of an image (affine group), or small diffeomorphic deformations due to a slight change of
point of view (group of diffeomorphisms), the representation is equivalent to the data modulo the action
of the group.
However, when the nuisances are not a group, as is the case for occlusions, it is not possible to achieve such equivalence, that is, there is a loss. To address this problem, \cite{soatto2016visual} defined optimal representations not in terms of maximality, but in terms of \emph{sufficiency}, and characterized representations that are both sufficient and invariant. They argue that the management of nuisance factors common in visual data, such as changes of viewpoint, local deformations, and changes of illumination, is directly tied to the specific structure of deep convolutional networks, where local marginalization of simple nuisances at each layer results
in marginalization of complex nuisances in the network as a whole.

Our work fits in this last line of thinking, where the goal is not equivalence to the data up to the action of (group) nuisances, but instead sufficiency for the task. \textbf{Our main contribution} in this sense
is to show that injecting noise into the layers, and therefore
using a non-deterministic function of the data, can actually simplify the theoretical analysis and
lead to disentangling and improved insensitivity to nuisances. This is an alternate explanation to that put forth by the references above.

\section{Optimal representations and the Information Bottleneck loss}
\label{sect-optimal}

Given some input data $\x$, we want to compute
some (possibly nondeterministic) function of $\x$, called a \emph{representation},
that has some desirable properties in view of the task $\y$, for instance by being more convenient to work with,
exposing relevant statistics, or being easier to store.
Ideally, we want this representation to be as good as the original data for the task, and not squander resources modeling parts of the data that are irrelevant to the task.
Formally, this means that we want to find a random variable $\z$ satisfying the following conditions:
\begin{enumerate}[(i)]
    \item $\z$ is a \textbf{representation} of $\x$; that is, its distribution depends only on $\x$, as expressed by the following Markov chain:
        \begin{center}
        \begin{tikzpicture}[]
            \node[latent] (y) {$y$};
            \node[obs, right = of y] (x) {$x$};
            \node[latent, right = of x] (z) {$z$};
            
            \edge[] {y}{x};
            \edge[] {x}{z};
        \end{tikzpicture}
        \end{center}
    \item $\z$ is \textbf{sufficient} for the task $\y$,
    that is $I(\x;\y) = I(\z;\y)$, expressed by the Markov chain:
        \begin{center}
        \begin{tikzpicture}[]
            \node[latent] (y) {$y$};
            \node[latent, right = of y] (z) {$z$};
            \node[obs, right = of z] (x) {$x$};
            
            \edge[] {y}{z};
            \edge[] {z}{x};
        \end{tikzpicture}
        \end{center}
    \item among all random variables satisfying these requirements, the mutual information $I(\x;\z)$
    is \textbf{minimal}. This means that $\z$ discards all variability in the data that is not relevant to the task.
\end{enumerate}
Using the identity $I(\x;\y) - I(\z;\y) = H(\y|\z) - H(\y|\x)$, where $H$ denotes the entropy and $I$ the mutual information, it is easy to see that 
the above conditions are equivalent
to finding a distribution $p(\z|\x)$ which solves the optimization problem
\begin{align*} 
\mathrm{minimize}\quad & I(\x;\z) \\
\text{s.t.}\quad  & H(\y|\z) = H(\y|\x).
\end{align*}
The minimization above is difficult in general. For this reason, Tishby et al. have introduced a generalization known as the \emph{Information Bottleneck Principle} and the associated Lagrangian to be minimized \cite{tishby2000information}
\begin{equation}
\label{eq:IB-lagrangian}
\L = H(\y|\z) + \beta I(\x ; \z).
\end{equation}
where $\beta$ is a positive constant that manages the trade-off between sufficiency (the performance on the task, as measured by the first term) and minimality (the complexity of the representation, measured by the second term). It is easy to see that, in the limit $\beta \to 0^+$, this  is equivalent to the original problem, where $\z$ is a minimal sufficient statistic. When all random variables are discrete
and $\z=T(\x)$ is a deterministic function of $\x$,
the algorithm proposed by \cite{tishby2000information} can be used to minimize
the IB Lagrangian efficiently.
However, no algorithm is known to minimize
the IB Lagrangian for non-Gaussian, high-dimensional continuous random variables.

One of our key results is that, when we restrict to the family of distributions
obtained by injecting noise to one layer of a neural network, we can efficiently approximate
and minimize the IB Lagrangian.\footnote{Since we restrict the family of distributions, there is no guarantee that the resulting representation will be optimal. We can, however, iterate the process to obtain incrementally improved approximations.
} 
As we will show, this process can be effectively implemented  through 
a generalization of the dropout layer that we call \emph{Information Dropout}.

To set the stage, we rewrite the IB Lagrangian as a per-sample loss function. Let $p(\x,\y)$ denote the true distribution of the data, from which the training set $\set{(\x_i, \y_i)}_{i=1,\ldots,N}$ is sampled, and let  $p_\theta(\z|\x)$ and $p_\theta(\y|\z)$ denote the unknown distributions that we wish to estimate, parametrized by $\theta$. Then, we can write the two terms in the IB Lagrangian as 
\begin{align*}
H(\y|\z) &\simeq \E_{\x,\y \sim p(\x,\y)}\bra{\E_{\z \sim p_\theta(\z|\x)}  [-\log p_\theta(\y|\z)]} \\
I(\x ; \z) &= \E_{\x \sim p(\x)}[\KL{p_\theta(\z|\x)}{p_\theta(\z)}],
\end{align*}
where ${\rm KL}$ denotes the Kullback-Leibler divergence.
We can therefore approximate the IB Lagrangian empirically as 
\begin{equation}
\label{eq:empirical-IB}
\L = \frac{1}{N} \sum_{i=1}^N
\E_{z \sim p(\z|\x_i)}[-\log p(\y_i|\z)] + \beta \KL{p_\theta(\z|\x_i)}{p_\theta(\z)}.
\end{equation}
Notice that the first term simply is the average cross-entropy, which is the most commonly used loss function in deep learning. The second term can then be seen as a regularization term.
In fact, many classical regularizers, like the $L_2$ penalty, can be expressed in the form of \cref{eq:empirical-IB} (see also \cite{gal2015bayesian}).
In this work, we interpret the KL term as a reuglarizer that penalizes the transfer of information from $\x$ to $\z$. In the next section, we discuss ways to control such information transfer through the injection of noise.

\begin{rmk*}[Deterministic vs. stochastic representations]    
    Aside from being easier to work with, stochastic representations
   can attain a lower value of the IB Lagrangian than any
    deterministic representation. For example, consider
    the task of reconstructing single random bit $y$ given a noisy observation $x$.
    The only deterministic representations are equivalent to
    the either the noisy observation itself or to the trivial constant map.
    It is not difficult to check that
    for opportune values of $\beta$ and of the noise, neither 
    realize the optimal tradeoff reached by a suitable stochastic representation.
\end{rmk*}

\begin{rmk*}[Approximate sufficiency]
The quantity $I(x;y|z) = H(y|z) - H(y|x) \geq 0$ can be seen as a measure of the distance between $p(x,y,z)$ and the closest distribution $q(x,y,z)$ such that $\x\to\z\to\y$ is a Markov chain. Therefore, by minimizing \cref{eq:IB-lagrangian} we find representations that are increasingly ``more sufficient'', meaning that they are closer to an actual Markov chain.
\end{rmk*}

\section{Disentanglement}
\label{sec:disentanglement}

In addition to sufficiency and minimality, ``disentanglement of hidden factors'' is often cited as a desirable property of a representation, but seldom formalized. We can quantify disentanglement by measuring the \emph{total correlation},
or \emph{multivariate mutual information}, defined as
\begin{align*}
\TC(\z) &:= \sum_j H(z_j) - H(\z) \\
&= \KL{q(\z)}{\textstyle \prod_j q_j(z_j)}.
\end{align*}
Notice that the components of $\z$ are mutually independent if and only if $\TC(z)$ is zero. Adding this as a penalty in the IB Lagrangian, with a factor $\gamma$ yields
\begin{multline*}
\L = \frac{1}{N} \sum_{i=1}^N
\E_{z \sim p(\z|\x_i)}[-\log p(\y_i|\z)] + \\ + \beta \KL{p_\theta(\z|\x_i)}{p_\theta(\z)} + \gamma \TC(\z).
\end{multline*}
In general, minimizing this augmented loss is intractable, since to compute both the KL term and the total correlation, we need to know the marginal distribution $p_\theta(\z)$, which is not easily computable. However, the following proposition, that we prove in \Cref{sec:prior}, shows that if we choose $\gamma = \beta$, then the problem simplifies, and can be easily solved by adding an auxiliary variable.
\begin{prop}
The minimization problem
\begin{multline*}
\operatorname{minimize}_p \frac{1}{N} \sum_{i=1}^N
\E_{z \sim p(\z|\x_i)}[-\log p(\y_i|\z)] + \\ + \beta \set{\KL{p(\z|\x_i)}{p(\z)} + \TC(\z)},
\end{multline*}
is equivalent to the following minimization in two variables
\begin{multline*}
\operatorname{minimize}_{p,q} \frac{1}{N} \sum_{i=1}^N
\E_{z \sim p(\z|\x_i)}[-\log p(\y_i|\z)] + {}\\{} + \beta \KL{p(\z|\x_i)}{{\textstyle \prod_{i=1}^{|z|} q_i(\z_i)}}.
\end{multline*}
\end{prop}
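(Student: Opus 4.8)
The plan is to establish the claimed equivalence in two moves: an exact rewriting of the combined KL-plus-total-correlation penalty of the first problem as a single divergence, and a Gibbs-type argument showing that the auxiliary product distribution $q$ in the second problem is, at its optimum, exactly the quantity produced by that rewriting. Throughout I write the aggregated (marginal) distribution as $p(\z) = \frac{1}{N}\sum_{i=1}^N p(\z|\x_i)$ and denote by $p_j(z_j)$ its $j$-th marginal; with this notation the definition of total correlation reads $\TC(\z) = \KL{p(\z)}{\textstyle\prod_j p_j(z_j)}$, and since $\TC(\z)$ does not depend on the sample index, the factor $\frac{1}{N}\sum_i$ in front of it is immaterial.

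First I would merge the two information terms of the first objective. Writing the single expectation in $\TC(\z) = \int p(\z)\log\frac{p(\z)}{\prod_j p_j(z_j)}\,d\z$ as an average of per-sample expectations --- which is legitimate precisely because $p(\z)$ is the mixture $\frac{1}{N}\sum_i p(\z|\x_i)$ --- and adding it to $\frac{1}{N}\sum_i \KL{p(\z|\x_i)}{p(\z)}$, the two occurrences of $\log p(\z)$ cancel, giving the identity
\begin{equation*}
\frac{1}{N}\sum_{i=1}^N \KL{p(\z|\x_i)}{p(\z)} + \TC(\z) = \frac{1}{N}\sum_{i=1}^N \KL{p(\z|\x_i)}{\textstyle\prod_j p_j(z_j)}.
\end{equation*}
Consequently the first objective equals the second objective evaluated at the specific choice $q_j = p_j$.

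Next I would carry out the inner minimization over $q$ in the second problem for a fixed $p$. Expanding the divergence, the only dependence on $q = \prod_j q_j$ is through
\begin{equation*}
-\frac{1}{N}\sum_{i=1}^N \int p(\z|\x_i)\log\textstyle\prod_j q_j(z_j)\,d\z = -\sum_j \int p_j(z_j)\log q_j(z_j)\,dz_j,
\end{equation*}
where I used $\frac{1}{N}\sum_i p(\z|\x_i) = p(\z)$ to replace the conditionals by the marginals $p_j$. This is a sum of cross-entropies between the fixed marginals $p_j$ and the free densities $q_j$, and by Gibbs' inequality each summand is minimized exactly at $q_j = p_j$. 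Substituting this optimum recovers the right-hand side of the identity above.

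Combining the two steps, for every $p$ the minimum over $q$ of the second objective equals the first objective; hence the two problems attain the same value and the same optimal $p$, which is the claim. The one step deserving care is the first: the cancellation that collapses the conditional-to-marginal divergence and the marginal total correlation into a single conditional-to-product-of-marginals divergence relies essentially on $p(\z)$ being exactly the aggregated posterior $\frac{1}{N}\sum_i p(\z|\x_i)$, rather than some externally fixed prior. The Gibbs step is then routine, and the role of the auxiliary variable $q$ is precisely to make the otherwise intractable total-correlation term computable, since at optimality $q$ coincides with the product of the marginals of $p(\z)$.
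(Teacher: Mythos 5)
Your proof is correct and takes essentially the same route as the paper's: both hinge on the observation that the optimal factorized $q$ is the product of the marginals of the aggregated posterior $p(\z)=\frac{1}{N}\sum_i p(\z|\x_i)$, together with the decomposition $\log\frac{p(\z|\x_i)}{\prod_j p_j(z_j)} = \log\frac{p(\z|\x_i)}{p(\z)} + \log\frac{p(\z)}{\prod_j p_j(z_j)}$, which after averaging identifies the combined penalty with $\KL{p(\z|\x_i)}{\prod_j p_j(z_j)}$. The only difference is presentational: you obtain the optimal $q$ via Gibbs' inequality on the cross-entropy terms, whereas the paper uses a Lagrange-multiplier stationarity computation; your variant is marginally cleaner since it certifies a global minimum directly.
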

In other words, minimizing the standard IB Lagrangian assuming that the activations are independent, i.e.\ having $q(\z) = \prod_i q_i(\z_i)$, is equivalent to enforcing disentanglement of the hidden factors. It is interesting to note that this independence assumption is already adopted often by practitioners on grounds of simplicity, since the actual marginal $p(\z) = \int_\x p(\x,\z)d\x$ is often incomputable. That using a factorized model results in ``disentanglement'' was also observed empirically by \cite{higgins17beta} which, however, introduced an ad-hoc metric based on classifiers of low VC-dimension, rather than the more natural Total Correlation adopted here.

In view of the previous proposition, from now on we will assume that the activations are independent and ignore the total correlation term.

\section{Information Dropout}
\label{sect-information}

Guided by the analysis in the previous sections, and to emphasize the role of stochasticity, we consider representations $\z$ obtained by computing a deterministic map $f(\x)$ of the data (for instance a sequence of convolutional and/or fully-connected layers of a neural network), and then multiplying the result component-wise by a random sample $\epsilon$ drawn from a parametric noise distribution $p_\alpha$ with unit mean and variance that depends on the input $\x$:
\begin{align*}
\varepsilon &\sim p_{\alpha(\x)}(\varepsilon), \\
\z &= \varepsilon \odot f(\x),
\end{align*}
where ``$\odot$'' denotes the element-wise product.
Notice that, if $p_{\alpha(\x)}(\varepsilon)$ is a Bernoulli distribution rescaled
to have mean $1$, this reduces exactly to the classic binary 
dropout layer. As we discussed in \Cref{sec:previous-work},
there are also variants of dropout that use different distributions.

\begin{center}
\begin{tikzpicture}[x=1cm, y=0.7cm]
    \node[obs] (x) {$x$};
    \node[latent, right = of x] (fx) {$f(x)$};
    
    \node[ right = of fx] (z) {$\z=\varepsilon \odot f(\x)$};
    \node[latent, below = of z] (e) {$\varepsilon$};

    \node[latent, right = of z] (y) {$y$};
    
    \edge[] {x}{fx};
    \edge[] {fx}{e};
    \edge[] {fx}{z};
    \edge[] {e}{z};
    \edge[] {z}{y};
\end{tikzpicture}
\end{center}

A natural choice for the 
distribution $p_{\alpha(\x)}(\varepsilon)$, which also simplifies the theoretical analysis,
is the log-normal distribution $p_{\alpha(\x)}(\varepsilon) = \log \N(0,\alpha_\theta^2(\x))$.
Once we fix this noise distribution, given the above expression for $\z$,
we can easily compute the distribution $p_\theta(\z|\x)$ that appears in \cref{eq:empirical-IB}.
However, to be able to compute the KL-divergence term, we still need to fix a prior distribution $q_\theta(\z)$.
The choice of this prior largely depends on the expected distribution of the activations $f(\x)$.
Recall that, by \Cref{sec:disentanglement}, we can assume that all activations are independent, thus simplifying the computation.
Now, we concentrate on two of the most common activation functions,
the \emph{rectified linear unit} (ReLU), which is easy to compute and works well in practice,
and the \emph{Softplus} function, which can be seen as a strictly positive and differentiable approximation of ReLU.

A network implemented using only ReLU and a final Softmax layer has the
remarkable property of being scale-invariant, meaning that multiplying all weights, biases,
and activations by a constant does not change the final result. Therefore, from a theoretical point of view,
it would be desirable to use a scale-invariant prior. The only such prior is the improper log-uniform,
$q(\log(z)) = c$, or equivalently $q(z) = c/z$, which was also suggested by \cite{kingma2015variational}, but as a prior for the weights of the network, rather than the activations.
Since the ReLU activations are frequently zero,
we also assume $q(z=0) = q_0$ for some constant $0 \leq q_0 \leq 1$. Therefore, the final prior      
has the form $q(z)=q_0 \delta_0(z) + c/z$, where $\delta_0$ is the Dirac delta in zero.
In \Cref{subfig:relu}, we compare this prior distribution with the actual
empirical distribution $p(z)$ of a network with ReLU activations.

In a network implemented using Softplus activations, a log-normal is a good fit of the distribution of the activations.
This is to be expected, especially when using batch-normalization, since the pre-activations will approximately follow a normal distribution with zero mean,
and the Softplus approximately resembles a scaled exponential near zero. Therefore, in this case we suggest using a log-normal distribution as our prior $q(z)$. In \Cref{subfig:softplus}, we compare this prior
with the empirical distribution $p(z)$ of a network with Softplus activations.

\begin{figure}

\begin{subfigure}{.45\linewidth}
    \centering
    \includegraphics[width=0.99\linewidth]{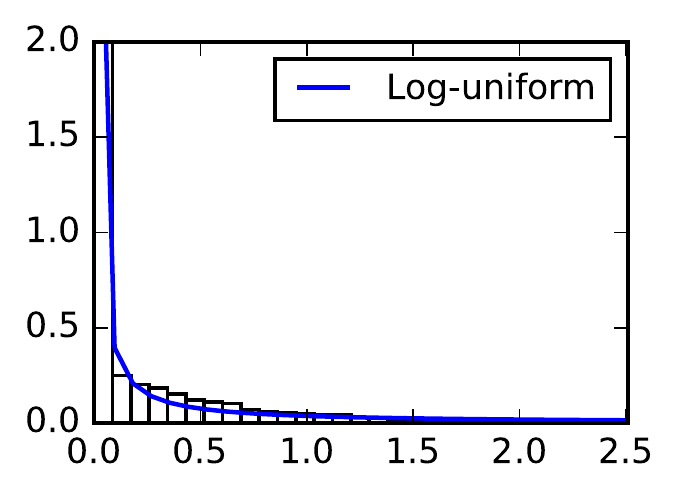}
    \caption{\label{subfig:relu} Histogram of ReLU activations}
\end{subfigure}
\begin{subfigure}{.45\linewidth}
    \centering
    \includegraphics[width=0.99\linewidth]{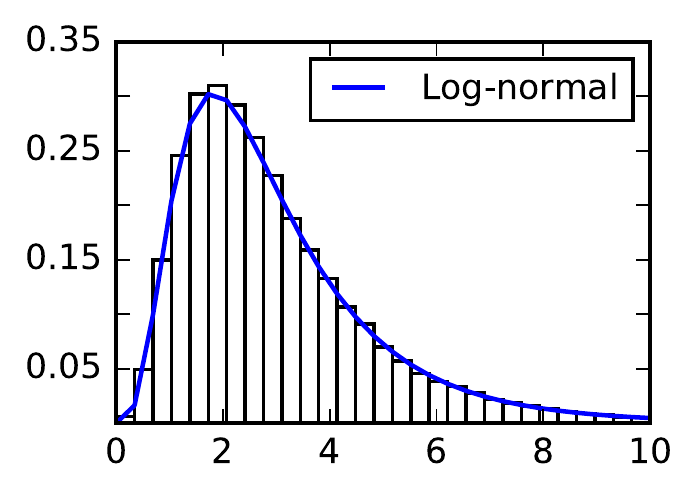}
    \caption{\label{subfig:softplus} Histogram of Softplus activations}
\end{subfigure}
\caption{
    Comparison of the empirical distribution $p(z)$ of the post-noise activations with our proposed prior
    when using: 
    (\subref{subfig:relu}) ReLU activations, for which we propose a log-uniform prior,
    and (\subref{subfig:softplus}) Softplus activations, for which we propose a log-normal prior.
    In both cases, the empirical distribution approximately follows the proposed prior.
    Both histograms where obtained from the last dropout layer of the All-CNN-32 network
    described in \Cref{table:networks}, trained on CIFAR-10.
}
\label{fig:activatios}
\end{figure}

Using these priors, we can finally compute the KL divergence term in 
\cref{eq:empirical-IB} for both ReLU activations and Softplus activations. We prove the 
following two propositions in \Cref{sec:kl-divergence}.

\begin{prop}[Information dropout cost for ReLU]
Let $z = \varepsilon \cdot f(x)$,
where $\varepsilon \sim p_\alpha(\varepsilon)$, and assume $p(z) = q\delta_0(z) + c/z$.
Then, assuming $f(x) \neq 0$, we have
\[
\KL{p_\theta(z|x)}{p(z)} = -H(p_{\alpha(x)}(\log \varepsilon)) + \log c
\]
In particular, if $p_\alpha(\varepsilon)$ is chosen to be the log-normal distribution $p_\alpha(\varepsilon) = \log \N(0,\alpha^2_\theta(x))$,
we have
\begin{equation}
\label{eq:kl-div-relu}
\KL{p_\theta(z|x)}{p(z)} = - \log \alpha_\theta(x)  + const.
\end{equation}
If instead $f(x)=0$, we have
\[
\KL{p_\theta(z|x)}{p(z)} = -\log q.
\]
\end{prop}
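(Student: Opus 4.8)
The plan is to exploit the multiplicative (scale) structure of $z = \varepsilon\cdot f(x)$ by passing to logarithmic coordinates, in which the KL divergence is invariant under the invertible reparametrization $z \mapsto \log z$. Concretely, I would (i) rewrite the divergence in the variable $w = \log z$; (ii) note that the log-uniform prior becomes \emph{flat} in this coordinate, so that the divergence collapses to a differential entropy plus a constant; (iii) use that $\log z = \log\varepsilon + \log f(x)$ differs from $\log\varepsilon$ only by the deterministic shift $\log f(x)$, together with the translation invariance of differential entropy, to cancel all dependence on $f(x)$; and finally (iv) specialize to log-normal noise and treat the degenerate case $f(x)=0$ separately.

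For $f(x)\neq 0$ (so $f(x)>0$ for ReLU, and $z>0$ almost surely under log-normal noise), only the continuous part $c/z$ of the prior is relevant. Under $w=\log z$ its Jacobian-transformed density is $q(w) = (c/e^{w})\,e^{w} = c$, a constant --- this is exactly the scale invariance highlighted earlier in the text. Writing $u = \log\varepsilon$, the conditional law of $w$ given $x$ is the law of $u$ translated by $\log f(x)$, so
\[
\KL{p_\theta(z|x)}{p(z)} = \int p(w|x)\,\log\frac{p(w|x)}{c}\,dw = -H\big(p(w|x)\big) - \log c.
\]
Since differential entropy is translation invariant, $H(p(w|x)) = H(p_{\alpha(x)}(\log\varepsilon))$, which yields the claimed expression up to the additive constant $\log c$ (its sign is immaterial, as $c$ only renders the prior improper).

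To specialize, under $p_\alpha(\varepsilon)=\log\N(0,\alpha_\theta^2(x))$ we have $\log\varepsilon\sim\N(0,\alpha_\theta^2(x))$, whose differential entropy is $\frac{1}{2}\log(2\pi e\,\alpha_\theta^2(x)) = \log\alpha_\theta(x) + \text{const}$, recovering $-\log\alpha_\theta(x)+\text{const}$. For $f(x)=0$ the product forces $z = \varepsilon\cdot 0 = 0$ deterministically, so $p(z|x)=\delta_0(z)$ collapses onto the atom of the prior; computing the divergence against the reference measure $\delta_0+\text{Lebesgue}$, the conditional has density $1$ at $z=0$ while the prior carries mass $q$ there, giving $\KL{p_\theta(z|x)}{p(z)} = \log(1/q) = -\log q$.

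The main obstacle I anticipate is the careful bookkeeping of the mixed prior $q\,\delta_0(z)+c/z$: one must check that the atomic and absolutely continuous parts never interact --- the atom contributes nothing when $f(x)\neq 0$ (since $z>0$ a.s.), and the continuous part contributes nothing when $f(x)=0$ (since the conditional is supported on $\{0\}$). Making the degenerate case rigorous is cleanest by introducing a common dominating measure rather than manipulating densities directly; the continuous-case computation is otherwise routine once the log change of variables and the translation invariance of entropy are in place.
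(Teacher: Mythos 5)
Your proposal is correct and follows essentially the same route as the paper's own proof: pass to $w=\log z$, use invariance of the KL divergence under reparametrization so that the log-uniform prior becomes the constant $c$, reduce the divergence to the negative differential entropy of $\log\varepsilon$ (with translation invariance absorbing the shift by $\log f(x)$), and treat $f(x)=0$ separately as a point mass evaluated against the atom $q\,\delta_0$. If anything you are slightly more careful than the paper, both in making the dominating measure explicit for the degenerate case and in observing that the computation actually yields $-\log c$ rather than the $+\log c$ appearing in the statement --- a sign slip the paper's own derivation shares and which is immaterial since $c$ is an arbitrary constant of the improper prior.
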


\begin{prop}[Information dropout cost for Softplus]
Let $z = \varepsilon \cdot f(x)$,
where $\varepsilon \sim p_\alpha(\varepsilon) = \log \N(0, \alpha_\theta^2(x))$, and assume $p_\theta(z) = \log \N(\mu,\sigma^2)$.
Then, we have
\begin{equation}
\KL{p_\theta(z|x)}{p(z)} = \frac{1}{2\sigma^2} \pa{\alpha^2(x) + \mu^2} - \log \frac{\alpha(x)}{\sigma} - \frac{1}{2}.
\end{equation}
\end{prop}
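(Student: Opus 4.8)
The plan is to reduce everything to a Kullback--Leibler divergence between two univariate Gaussians, for which a closed form is standard. The first step is to identify the conditional law $p_\theta(z\mid x)$. Since $z=\varepsilon\cdot f(x)$ with $f(x)$ deterministic given $x$ and $\log\varepsilon\sim\N(0,\alpha_\theta^2(x))$, taking logarithms gives $\log z=\log f(x)+\log\varepsilon$, so that $\log z$ is Gaussian with mean $\log f(x)$ and variance $\alpha^2(x)$. In other words $p_\theta(z\mid x)=\log\N(\log f(x),\,\alpha^2(x))$, which is a log-normal of exactly the same family as the assumed prior $p_\theta(z)=\log\N(\mu,\sigma^2)$.

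The key step is then to avoid integrating log-normal densities directly and instead exploit the invariance of the KL divergence under the smooth invertible reparametrization $t=\log z$. Both the conditional and the prior densities pick up the same Jacobian factor $1/z=e^{-t}$, which cancels inside the log-ratio defining the divergence; hence $\KL{p_\theta(z\mid x)}{p_\theta(z)}$ equals the divergence between the underlying normals $\N(\log f(x),\alpha^2(x))$ and $\N(\mu,\sigma^2)$ on the log scale. This is the conceptual core: it turns a seemingly awkward computation into the textbook Gaussian case.

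From here I would simply substitute into the closed form for the KL between two univariate Gaussians,
\[
\KL{\N(m_1,s_1^2)}{\N(m_2,s_2^2)}=\log\frac{s_2}{s_1}+\frac{s_1^2+(m_1-m_2)^2}{2s_2^2}-\frac{1}{2},
\]
with $m_1=\log f(x)$, $s_1=\alpha(x)$, $m_2=\mu$, $s_2=\sigma$. Recognizing that $\log(\sigma/\alpha(x))=-\log(\alpha(x)/\sigma)$ and collecting the remaining terms reproduces the stated expression, yielding the combined term $\frac{1}{2\sigma^2}\pa{\alpha^2(x)+\mu^2}$.

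I expect the only real obstacle to be the bookkeeping of the two means. The direct substitution produces the cross term $(\log f(x)-\mu)^2$ rather than $\mu^2$, so matching the proposition verbatim requires being explicit about the convention for the prior mean (e.g.\ taking $\mu$ to denote the discrepancy $\log f(x)-\mu$ between the conditional log-mean and the prior log-mean, or equivalently centering the prior at the data log-mean). Everything else is routine: identifying the conditional distribution and invoking reparametrization invariance are the substantive moves, and the final line is pure algebra.
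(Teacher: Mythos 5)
Your proof is correct and follows essentially the same route as the paper's: the paper likewise reduces the statement, via invariance of the KL divergence under the reparametrization $t=\log z$, to the standard closed form for the divergence between the two underlying univariate Gaussians $\N(\log f(x),\alpha^2(x))$ and $\N(\mu,\sigma^2)$. Your bookkeeping remark is also well taken --- the paper's one-line proof never carries out the substitution, and the general formula indeed produces $(\log f(x)-\mu)^2$ where the proposition writes $\mu^2$, so the stated result implicitly absorbs $\log f(x)$ into the convention for the prior mean, exactly as you observe.
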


Substituting the expression for the KL divergence in \cref{eq:kl-div-relu} inside \cref{eq:empirical-IB},
and ignoring for simplicity the special case $f(x)=0$, we obtain the following loss function for ReLU activations
\begin{equation}
\L = \frac{1}{N} \sum_{i=1}^N \E_{\z\sim p_{\theta}(\z|\x_i)}[\log p(\y_i|\z)] + \beta \log \alpha_\theta(\x_i),
\end{equation}
and a similar expression for Softplus. Notice that
the first expectation can be approximated by sampling
(in the experiments we use one single sample, as customary for dropout),
and is just the average cross-entropy term that is typical in deep learning.
The second term, which is new, penalizes the network for choosing a low variance for the noise,
i.e.\ for letting more information pass through to the next layer.
This loss can be optimized easily using stochastic gradient descent and the
reparametrization trick of \cite{kingma2013auto} to back-propagate the gradient through
the sampling operation. 

\section{Variational autoencoders and Information Dropout}
\label{sec:vae}
In this section, we outline the connection between variational autoencoders \cite{kingma2013auto} and Information Dropout.
A variational autoencoder (VAE) aims to reconstruct, given a training dataset $\D=\set{\x_i}$,
a latent random variable $\z$ such that the observed data $\x$ can be thought as being
generated by the, usually simpler, variable $\z$ through some unknown generative process $p_\theta(\x|\z)$. In practice, this is done by minimizing the negative variational lower-bound to the marginal log-likelihood of the data
\begin{multline*}
\L(\theta) = \frac{1}{N} \sum_{i=1}^N \E_{\z \sim p_\theta(\z|\x_i)}[-\log p_\theta(\x_i|\z)] \\ +  \KL{p_\theta(\z|\x_i)}{p(\z)},
\end{multline*}
which can be optimized easily using the SGVB method of \cite{kingma2013auto}.
Interestingly, when the task is reconstruction, that is when $\y=\x$, the IB loss function in \cref{eq:empirical-IB}
reduces to

\begin{IEEEeqnarray*}{l}
\label{eq:information-dropout-vae}
\L(\theta) = \frac{1}{N} \sum_{i=1}^N \E_{\z \sim p_\theta(\z|\x_i)}[-\log p_\theta(\x_i|\z)] \\
\IEEEeqnarraymulticol{1}{r}{
+\> \beta \KL{p_\theta(\z|\x_i)}{p(\z)}.
}\IEEEyesnumber
\end{IEEEeqnarray*}

Therefore, by letting $\beta = 1$ in the previous expression, we obtain exactly the 
loss function of a variational autoencoder, that is, the representation $\z$
computed by the Information Dropout layer coincides with the latent
variable $\z$ computed by the VAE.
This is in part to be expected, since the objective of Information Dropout 
is to create a representation of the data that is minimal sufficient for the task of reconstruction,
and the latent variables of a VAE can be thought as such a representation.
The term $\beta$ in this case can be seen as managing the trade off between
the fidelity of the reconstruction of the input from the representation (measured by the  cross-entropy),
against the compression factor (complexity) of the representation (measured by the KL-divergence). As anticipated, Bayesian theory would prescribe $\beta = 1$, whereas it has been observed empirically that other choices can yield better performance. In the IB framework, the choice of $\beta$ is for the designer or model selection algorithm to choose.

Taking inspiration by experimental evidence in neuroscience, a contemporary work by Higgins et al. \cite{higgins17beta} also suggests the use of the loss function in eq. (\ref{eq:information-dropout-vae}) to train a VAE. They prove experimentally that for higher values
of $\beta$ the resulting representation $\z$ is increasingly disentangled. This result is 
indeed compatible with our observation in \Cref{sec:disentanglement}, and in \Cref{sect-experiments}
we prove related
experimental results in a more general situation.

\section{Experiments}
\label{sect-experiments}

The goal of our experiments is to validate the theory, by showing that indeed increasing noise level yields reduced dependency on nuisance factors, a more disentangled representation, and that by adapting the noise level to the data we can better exploit architectures of limited capacity.

To this end, we first compare Information Dropout with the Dropout baseline on several standard benchmark datasets using different
networks architecture, and highlight a few key properties.
All the models were implemented using TensorFlow \cite{tensorflow2015-whitepaper}.
As \cite{kingma2015variational} also notice, letting the variance of the noise grow excessively
 leads to poor generalization. To avoid this problem,
we constraint $\alpha(x) < 0.7$, so that the maximum variance
of the log-normal error distribution will be approximately 1, the same as binary dropout when using a drop probability of 0.5.
In all experiments we divide the KL-divergence term by the number of training samples,
so that for $\beta=1$ the scaling of the KL-divergence term
in similar to the one used by Variational Dropout (see \Cref{sec:previous-work}). 

\newcommand{\cluttered}[1]{
    \centering
    \includegraphics[width=0.95\linewidth]{images/cluttered_#1.pdf}
}
\begin{figure}
\centering
\cluttered{6}


\caption{
\label{fig:nuisances}
Plot of the total KL-divergence at each spatial location in the first three Information Dropout
layers (respectively of sizes 48x48, 24x24 and 12x12) of All-CNN-96 (see \Cref{table:networks}) trained on Cluttered MNIST with different values of $\beta$.
This measures how much information from each part of the image the Information
Dropout layer is letting flow to the next layer.
While for low value $\beta$ information about the nuisances
is still transmitted to the next layers, for higher value of $\beta$
the Information Dropout layers drop the information as soon as the receptive field 
is big enough to recognize it as a nuisance.
The resulting representation is therefore more robust
to nuisances, and provides better generalization performances.
Unlike in classical dropout or Variational Dropout,
the noise added by Information Dropout is tailored to the specific sample,
to the point that the KL-divergence alone provides enough information to localize the digit.
We provide more plots for different input samples in \Cref{sec:plots}.
}
\end{figure}

\begin{figure}

\begin{subfigure}{0.48\linewidth}
\centering
\includegraphics[width=0.99\linewidth]{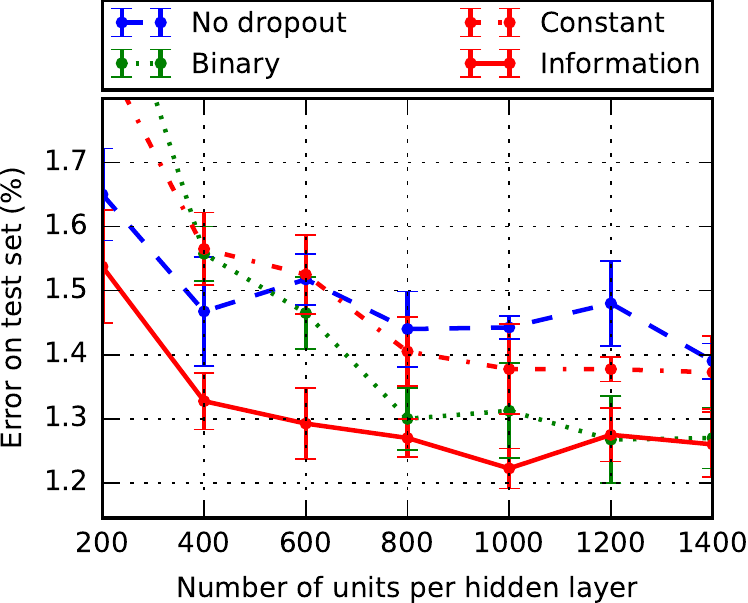}
\caption{MNIST}
\label{subfig:MNIST}
\end{subfigure}
\ 
\begin{subfigure}{0.48\linewidth}
\centering
\includegraphics[width=0.99\linewidth]{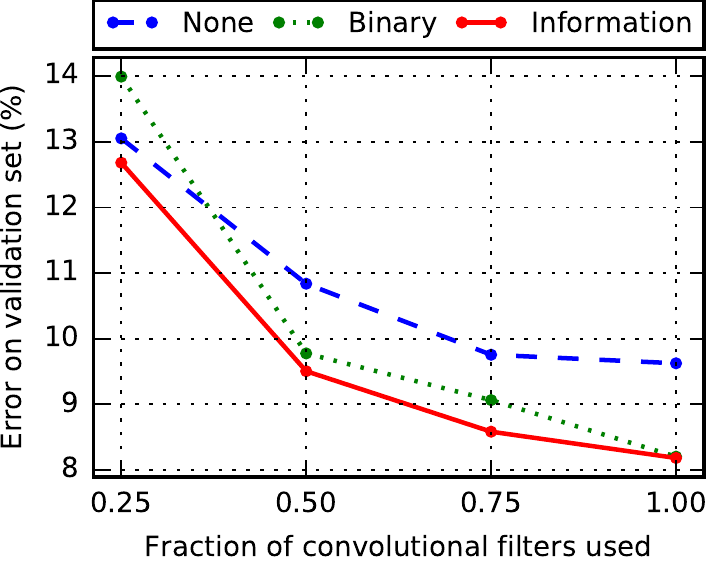}
\caption{CIFAR-10}
\label{subfig:CIFAR}
\end{subfigure}
\caption{\label{fig:MNIST}
(\subref{subfig:MNIST}) Average classification error on MNIST over 3 runs of several dropout methods applied to a fully connected network with three hidden layers and ReLU activations.
Information dropout outperforms binary dropout, especially on smaller networks, possibly because
dropout severely reduces the already limited capacity of the network, while Information Dropout
can adapt the amount of noise to the data and the size of the network.
Information dropout also outperforms a dropout layer
that uses constant log-normal noise with the same variance,
confirming the benefits of adaptive noise.
(\subref{subfig:CIFAR}) Classification error on CIFAR-10 for several dropout methods applied to the All-CNN-32 network (see \Cref{table:networks}) using Softplus activations.
}
\end{figure}

\newcommand{\occlusion}[1]{
    \includegraphics[width=1cm]{images/occluded_#1.png}
}

\begin{figure}
\centering

\begin{subfigure}{0.3\linewidth}
\centering



\occlusion{0}
\occlusion{1}

\vspace{1em}

\occlusion{4}
\occlusion{7}

\vspace{0.5cm}

\end{subfigure}
\begin{subfigure}{0.65\linewidth}
\centering
\includegraphics[width=0.98\linewidth]{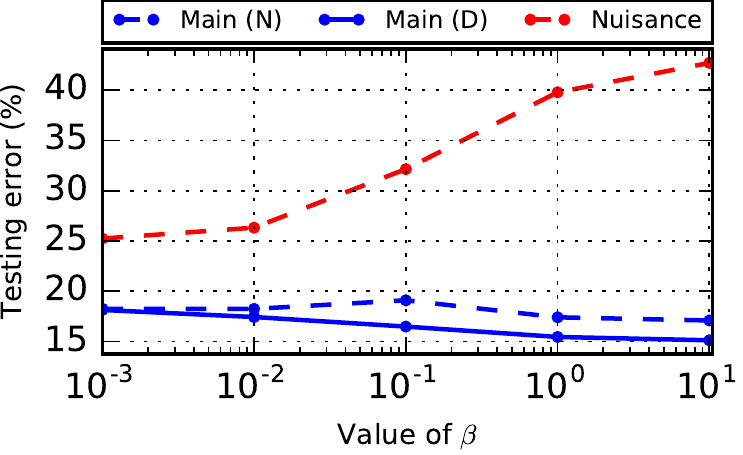}
\end{subfigure}

\caption{\label{fig:invariance}
A few samples from our Occluded CIFAR dataset and the plot of the testing error on the main task (classifying the CIFAR image)
and on the nuisance task (classifying the occluding MNIST digit) as $\beta$ varies.
For both tasks, we use
the same representation of the data trained for the main task using Information Dropout.
For larger values of $\beta$ the representation is increasingly more invariant to nuisances,
making the nuisance classification task harder, but improving the performance on the main task by preventing overfitting.
For the nuisance task, we test using the learned noisy representation of the data, since we are interested specifically in the effects of the noise. For the main task, we show the result both using the noisy representation (N), and the deterministic representation (D) obtained by disabling the noise at testing time.
}
\end{figure}

\textbf{Cluttered MNIST.} To visually asses the ability of Information Dropout to create a representation that is increasingly insensitive to nuisance factors,
we train the All-CNN-96 network (\Cref{table:networks}) for classification
on a Cluttered MNIST dataset \cite{mnih2014recurrent}, consisting of $96\times 96$ images containing a single MNIST digit together with 21 distractors.
The dataset is divided in 50,000 training images and 10,000 testing images. As shown in \Cref{fig:nuisances}, for small values of $\beta$, the network lets through both the objects of interest (digits) and distractors, to upper layers.  By increasing the value of $\beta$, we force the network to disregard the least discriminative components of the data, thereby building a better representation for the task. 
This behavior depends on the ability of Information Dropout to learn the structure of the nuisances
in the dataset which, unlike other methods, is facilitated by the ability to select noise level on a per-sample basis.

\textbf{Occluded CIFAR.} Occlusions are a fundamental phenomenon
in vision, for which it is difficult to hand-design invariant representations.
To assess that the approximate minimal sufficient representation produced by Information Dropout has this invariance property,
we created a new dataset by occluding images from CIFAR-10 with digits from MNIST (\Cref{fig:invariance}).
We train the All-CNN-32 network (\Cref{table:networks})
to classify the CIFAR image.
The information relative to the occluding MNIST digit is then a nuisance for the task, and
therefore should be excluded from the final representation.
To test this, we train
a secondary network to classify the nuisance MNIST digit using only the
the representation learned for the main task.
When training with small values of $\beta$,
the network has very little pressure
to limit the effect of nuisances in the representation, so we expect the nuisance classifier to perform
better. On the other hand, increasing the value of $\beta$ we expect its performance to degrade, since the
representation will become increasingly minimal, and therefore invariant to nuisances.
The results in \Cref{fig:invariance} confirm this intuition.

\textbf{MNIST and CIFAR-10.} Similar to \cite{kingma2015variational}, to see the effect of Information Dropout on different network sizes and architectures, we train on MNIST a network with 
3 fully connected hidden layers with a variable number of hidden units,
and we train on CIFAR-10 \cite{krizhevsky2009learning} the All-CNN-32 convolutional network
described in \Cref{table:networks},
using a variable percentage of all the filters. The fully connected network
was trained for 80 epochs, using stochastic gradient descent with momentum
with initial learning rate 0.07 and dropping
the learning rate by 0.1 at 30 and 70 epochs.
The CNN was trained for 200 epochs with initial learning rate 0.05 and dropping
the learning rate by 0.1 at 80, 120 and 160 epochs.
We show the results in \Cref{fig:MNIST}.
Information Dropout is comparable or outperforms binary dropout, especially on smaller networks.
A possible explanation is that dropout severely reduces the already limited capacity of the network,
while Information Dropout can adapt the amount of noise to the data and to the size of the network so that the relevant information can still flow to the successive layers. 
\Cref{fig:information} shows how the amount of transmitted information
also adapts to the size and hierarchical level of the layer.

\textbf{Disentangling.}
As we saw \Cref{sect-information}, in the case of Softplus activations, the logarithm of the activations approximately follow a normal distribution.
We can then approximate the total correlation
using the associated covariance matrix $\Sigma$. Precisely, we have
\[
\TC(\z) = - \log |\Sigma_0^{-1} \Sigma|
\]
where $\Sigma_0 = \operatorname{diag} \Sigma$ is the variance of the marginal distribution. In \Cref{fig:correlation} we plot for different values of $\beta$ the testing error and the total correlation of the representation learned by All-CNN-32 on CIFAR-10 when using 25\% of the filters. As predicted, when $\beta$ increases the total correlation diminishes, that is, the representation becomes disentangled, and the testing error improves, since we prevent overfitting. When $\beta$ is to large, information flow is insufficient,  and the testing error rapidly increases.

\begin{figure}
\centering
\includegraphics[width=0.8\linewidth]{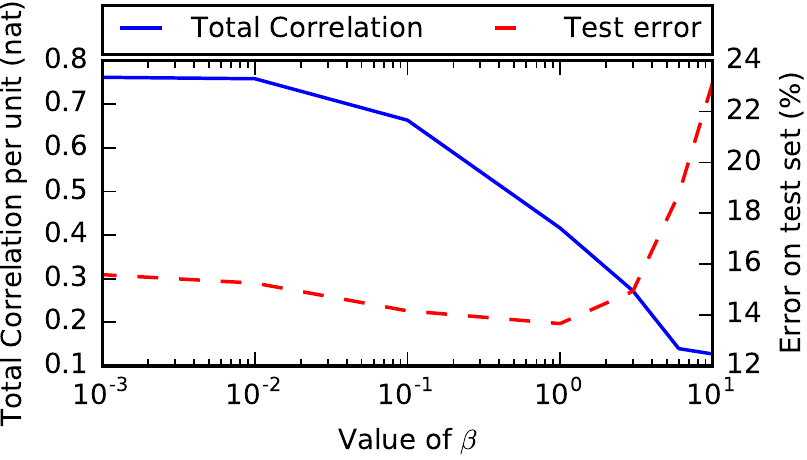}
\caption{\label{fig:correlation}
Plot of the test error and 
total correlation for different values of $\beta$ of the final layer 
of the All-CNN-32 network with Softplus activations trained on CIFAR-10 with 25\% of the filters.
Increasing $\beta$ the test error decreases (we prevent overfitting) and the representation becomes increasingly disentangled. When $\beta$ is too large, it prevents information from passing through, jeopardizing sufficiency and causingi a drastic increase in error.}
\end{figure}

\textbf{VAE.} To validate \Cref{sec:vae},
we replicate the basic variational autoencoder of \cite{kingma2013auto},
implementing it both with  Gaussian latent variables, as in the original, and with an Information Dropout layer.
We trained both implementations for 300 epochs dropping the learning rate 
by 0.1 at  30 and 120 epochs. We report the results in the following table.
The Information Dropout implementation has similar performance to the original,
confirming that a variational autoencoder can be considered
a special case of Information Dropout.

\begin{figure}[h!]

\begin{subfigure}{0.48\linewidth}
\centering
\includegraphics[width=0.98\linewidth]{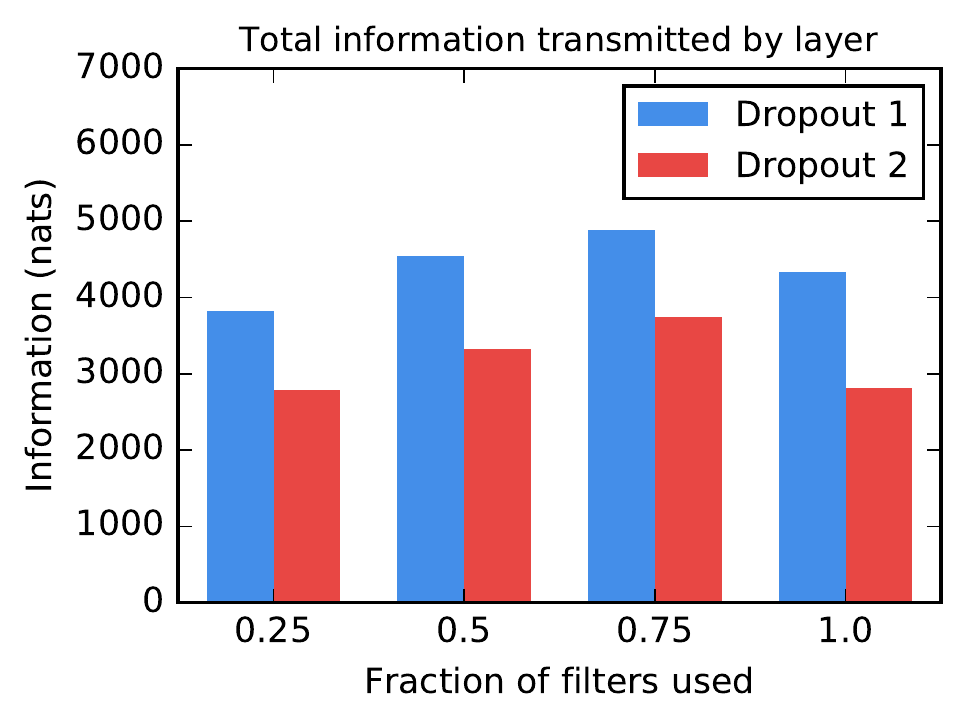}
\caption{\label{subfig:total_information}}
\end{subfigure}
\ 
\begin{subfigure}{0.48\linewidth}
\centering
\includegraphics[width=0.98\linewidth]{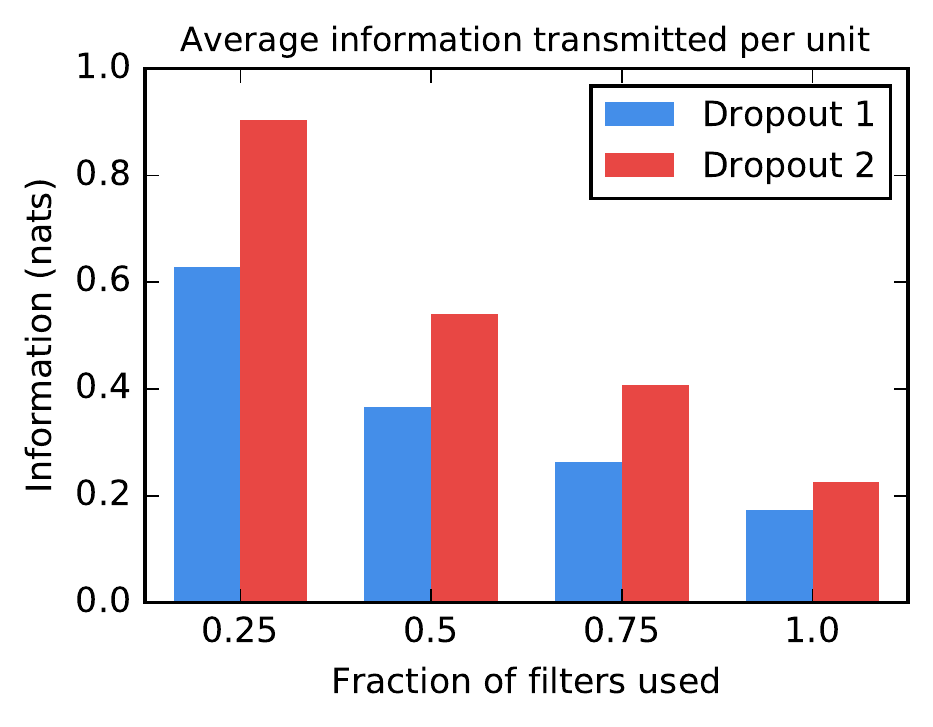}
\caption{\label{subfig:mean_information}}
\end{subfigure}
\caption{\label{fig:information}
Plots of (\subref{subfig:total_information}) the total information transmitted through the two dropout layers of a All-CNN-32 network with Softplus activations trained on CIFAR   and (\subref{subfig:mean_information}) the average quantity of information transmitted through each unit in the two layers. From (\subref{subfig:total_information}) we see that the total quantity of information transmitted does not vary much with the number of filters and that, as expected, the second layer transmits less information than the first layer, since prior to it more nuisances have been disentangled and discarded. In (\subref{subfig:mean_information}) we see that when we
decrease the number of filters, we force each single unit to let more information flow (i.e.\ we apply less noise), and that the units in the top dropout layer contain on average more  information relevant to the task than the units in the bottom dropout layer.
}
\end{figure}

\begin{table}[H]
\centering
\caption{Average variational lower-bound $\L$ on the testing dataset for a simple VAE,
where the size of the latent variable $\z$ is $256 \cdot k$ 
and the encoder/decoder each contain $512 \cdot k$ hidden units.
The latent variable $\z$ is implemented either using a Gaussian vector or using Information Dropout.
Both methods achieve a similar performance. }
\begin{center}
\begin{tabular}{ccc}
$k$ & \textbf{Gaussian} & \textbf{Information} \\
\hline \\
1 & -98.8 & -100.0 \\
2 & -99.0 & -99.1 \\
3 & -98.7 & -99.1 \\
\end{tabular}
\end{center}
\end{table}

\begin{table}
\centering
\caption{Structure of the networks used in the experiments.
The design of network is based on \cite{springenberg2014striving},
but we also add batch normalization before the activations of each layer.
Depending on the experiment,
the ReLU activations are replaced by Softplus activations, and the dropout
layer is implemented with binary dropout, Information Dropout or completely removed.}
\label{table:networks}

\small
\begin{subtable}[t]{0.48\linewidth}
\centering
\caption{All-CNN-32}
\vspace{0pt}
    \begin{tabular}{|c|}
    \hline
    Input 32x32 \\
    \hline 
    3x3 conv 96 ReLU\\
    3x3 conv 96 ReLU\\
    3x3 conv 96 ReLU stride 2 \\
    \hline
    dropout \\
    \hline
    3x3 conv 192 ReLU\\
    3x3 conv 192 ReLU\\
    3x3 conv 192 ReLU stride 2 \\
    \hline
    dropout \\
    \hline
    3x3 conv 192 ReLU\\
    1x1 conv 192 ReLU\\
    1x1 conv 10 ReLU\\
    \hline
    spatial average \\
    \hline
    softmax \\
    \hline
    \end{tabular}
\end{subtable}
\ 
\begin{subtable}[t]{0.48\linewidth}
\tiny
\centering
\caption{All-CNN-96}

\vspace{0pt}
    \begin{tabular}{|c|}
    \hline
    Input 96x96 \\
    \hline 
    3x3 conv 32 ReLU\\
    3x3 conv 32 ReLU\\
    3x3 conv 32 ReLU stride 2 \\
    \hline
    dropout \\
    \hline
    3x3 conv 64 ReLU\\
    3x3 conv 64 ReLU\\
    3x3 conv 64 ReLU stride 2 \\
    \hline
    dropout \\
    \hline
    3x3 conv 96 ReLU \\
    3x3 conv 96 ReLU \\
    3x3 conv 96 ReLU stride 2 \\
    \hline
    dropout \\
    \hline
    3x3 conv 192 ReLU\\
    3x3 conv 192 ReLU\\
    3x3 conv 192 ReLU stride 2 \\
    \hline
    dropout \\
    \hline
    3x3 conv 192 ReLU\\
    1x1 conv 192 ReLU\\
    1x1 conv 10 ReLU\\
    \hline
    spatial average \\
    \hline
    softmax \\
    \hline
    \end{tabular}
\end{subtable}

\end{table}

\section{Discussion}

We relate the Information Bottleneck principle and its associated Lagrangian to seemingly unrelated practices and concepts in deep learning, including dropout, disentanglement, variational autoencoding. For classification tasks, we show how an optimal representation can be achieved by injecting multiplicative noise in the activation functions, and therefore into the gradient computation during learning. 

A special case of noise (Bernoulli) results in dropout, which is standard practice originally motivated by ensemble averaging rather than information-theoretic considerations. Better (adaptive) noise models result better exploitation of limited capacity, leading to a method we call Information Dropout. We also establish connections with variational inference and variational autoencoding, and show that ``disentangling of the hidden causes'' can be measured by total correlation and achieved simply by enforcing independence of the components in the representation prior.

So, what may be done by necessity in some computational systems (noisy computation), turns out to be beneficial towards achieving invariance and minimality. Analogously, what has been done for convenience  (assuming a factorized prior) turns out to be the beneficial towards achieving ``disentanglement.''

Another interpretation of Information Dropout is as a way of biasing the network towards reconstructing
representations of the data that are compatible with a Markov chain
generative model, making it more suited to data coming from hierarchical models,
and in this sense is complementary to architectural constraint, such as
convolutions, that instead bias the model toward geometric tasks.

It should be noticed that injecting multiplicative noise to the activations can be thought of as a particular choice of a class of minimizers of the loss function, but can also be interpreted as a regularization terms added to the cost function, or as a particular procedure utilized to carry out the optimization. So the same operation can be interpreted as either of the three key ingredients in the optimization: the function to be minimized, the family over which to minimize, and the procedure with which to minimize. This highlight the intimate interplay between the choice of models and algorithms in deep learning.

\subsubsection*{Acknowledgments}

Work supported by  ARO, ONR, AFOSR.

\bibliographystyle{IEEEtran}
\bibliography{bibliography}

\clearpage

\appendices

\section{Computations}
\label{sec:kl-divergence}
\begin{prop*}[Information dropout cost for ReLU activations]
Let $z = \varepsilon \cdot f(x)$,
where $\varepsilon \sim p_\alpha(\varepsilon)$, and assume $p(z) = q\delta_0(z) + c/z$.
Then, assuming $f(x) \neq 0$, we have
\[
\KL{p_\theta(z|x)}{p(z)} = -H(p_{\alpha(x)}(\log \varepsilon)) + \log(c)
\]
In particular, if $p_\alpha(\varepsilon)$ is chosen to be the log-normal distribution $p_\alpha(\varepsilon) = \log \N(0,\alpha^2_\theta(x))$,
we have
\[
\KL{p_\theta(z|x)}{p(z)} = - \log \alpha_\theta(x)  + const.
\]
If instead $f(x)=0$, we have
\[
\KL{p_\theta(z|x)}{p(z)} = -\log q.
\]
\end{prop*}
\begin{proof}
If $f(x)\neq0$, then we also have $z\neq0$.
Since the KL-divergence is invariant under parameter transformations we can write
\begin{align*}
\KL{p_\theta(z|x)}{p_\theta(z)}
& = \KL{p_\theta(\log z|x)}{ p_\theta(\log z)} \\
&= \int \log \pa{\frac{p_\theta(\log z|x)}{p_\theta(\log z)}} p_\theta(\log z|x) dz \\
&= \int \log \pa{p_{\alpha(x)}(\log \varepsilon)} p_{\alpha(x)}(\log \varepsilon ) d\varepsilon \\ 
& \quad \quad \quad - \log c\\
&= - H(p_{\alpha(x)}(\log \varepsilon)) - \log c.
\end{align*}
For the second part, notice that by definition $p_{\alpha(x)} = \N(0,\alpha_\theta^2(x))$ and
\[
H( \N(0,\alpha) ) = \log \alpha_\theta(x) + \frac{1}{2} \log (2\pi e).
\]
Finally, if $f(x)=0$, then also $z=0$, so $p(z|x)=\delta_0(z)$. It is then easy to see that 
\[\KL{p_\theta(z|x)}{p(z)} = -\log p(z=0) = - \log q.\]
\end{proof}

\begin{prop*}[Information dropout cost for Softplus activations]
Let $z = \varepsilon \cdot f(x)$,
where $\varepsilon \sim p_\alpha(\varepsilon) = \log \N(0, \alpha_\theta^2(x))$, and assume $p_\theta(z) = \log \N(\mu,\sigma^2)$.
Then, we have
\[
\KL{p_\theta(z|x)}{p(z)} = \frac{1}{2\sigma^2} \pa{\alpha^2(x) + \mu^2} - \log \frac{\alpha(x)}{\sigma} - \frac{1}{2}.
\]
\end{prop*}
\begin{proof}
Since the KL divergence is invariant for reparametrizations, the divergence between
two log-normal distributions is equal to the divergence between the corresponding
normal distributions.
Therefore, using the known formula for the KL divergence of normals, we get the desired result.
\end{proof}

\section{Disentanglement}
\label{sec:prior}

In this appendix, we show that the minimization problem
\begin{multline*}
\min_p \frac{1}{N} \sum_{i=1}^N
\E_{z \sim p(\z|\x_i)}[-\log p(\y_i|\z)] + \\ + \beta \set{\KL{p(\z|\x_i)}{p(\z)} + \TC(\z)},
\end{multline*}
which is difficult in general since we do not have access to the 
joint distribution $p(\z)$, is equivalent to the following 
simpler optimization problem in two variables
\begin{multline*}
\min_{p,q} \frac{1}{N} \sum_{i=1}^N
\E_{z \sim p(\z|\x_i)}[-\log p(\y_i|\z)] + \\ + \beta \KL{p(\z|\x_i)}{{\textstyle \prod_{i=1}^{|z|} q_i(\z_i)}}.
\end{multline*}
In the following proposition, for simplicity,  we concentrate on discrete random
variables.

\begin{prop}
Let $\z=(z_1,\ldots,z_n)$ be a discrete random variable, let $p(\z|\x)$
be a generic probability distribution, and let $q(\z)=\prod_{i=1^n} q_i(z_i)$ be
a factorized prior distribution. Then, for any function $F(p)$,
 a minimization problem in the form
\[
\mathrm{minimize}_{p,q}\quad F(p) + \beta \E_{x}[\KL{p(\z|\x)}{q(\z)}],
\]
is equivalent to
\[
\mathrm{minimize}_{p}\quad F(p) + \beta \set{I_p(\z;\x) + \TC_p(\z)},
\]
where $I_p(\z;\x)$ is the mutual information and
$\TC_p(\z)$ is the total correlation of $\z$, assuming $\z \sim p(\z)$.
\end{prop}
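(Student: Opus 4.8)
The plan is to exploit the fact that the prior $q$ enters the objective only through the KL term, so for any fixed encoder $p(\z|\x)$ the inner minimization over the factorized $q$ can be performed in closed form; substituting its optimal value then transforms the first objective into the second. The key structural observation is that $F(p)$ and the mutual information are independent of $q$, so all of the $q$-dependence can be isolated in a single divergence against the true marginal.

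First I would introduce the true marginal $p(\z) = \E_\x[p(\z|\x)]$ and decompose the averaged divergence as
\[
\E_\x[\KL{p(\z|\x)}{q(\z)}] = I_p(\z;\x) + \KL{p(\z)}{q(\z)}.
\]
This follows by writing $\log\frac{p(\z|\x)}{q(\z)} = \log\frac{p(\z|\x)}{p(\z)} + \log\frac{p(\z)}{q(\z)}$ inside the double expectation: the first piece averages to the mutual information $I_p(\z;\x)$, while the second piece, after marginalizing over $\x$, collapses to $\KL{p(\z)}{q(\z)}$ since $\log\frac{p(\z)}{q(\z)}$ no longer depends on $\x$. The only $q$-dependence now sits in the term $\KL{p(\z)}{q(\z)}$.

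Next I would minimize $\KL{p(\z)}{\textstyle\prod_i q_i(z_i)}$ over the factors $q_i$. Denoting by $p_i(z_i)$ the $i$-th marginal of $p(\z)$, a short rearrangement using the definition $\TC_p(\z) = \sum_i H(z_i) - H(\z)$ gives
\[
\KL{p(\z)}{\textstyle\prod_i q_i(z_i)} = \TC_p(\z) + \sum_i \KL{p_i}{q_i}.
\]
Since each $\KL{p_i}{q_i} \geq 0$ and vanishes exactly at $q_i = p_i$ by Gibbs' inequality, the minimum over factorized priors is attained at the product of the true marginals and equals $\TC_p(\z)$.

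Combining the two steps, the inner minimization yields $\min_q \E_\x[\KL{p(\z|\x)}{q(\z)}] = I_p(\z;\x) + \TC_p(\z)$, so optimizing $q$ away reduces the joint problem exactly to the single-variable problem, establishing the claimed equivalence. I do not anticipate a serious obstacle: the only points requiring mild care are justifying the marginalization in the decomposition step and verifying that the Gibbs-inequality minimizer $q_i = p_i$ indeed lies in the admissible class of normalized factorized priors (which it does). The discreteness assumption keeps all the sums and entropies well defined, and the argument carries through verbatim for fixed $F(p)$ since $F$ never interacts with the $q$-minimization.
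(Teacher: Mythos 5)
Your proof is correct, and while it follows the same overall strategy as the paper (optimize out $q$, identify the optimal factorized prior as the product of marginals, substitute back), it solves the inner minimization by a genuinely different and arguably cleaner route. The paper introduces a Lagrange multiplier for the normalization constraint $\sum_{z_i} q_i(z_i)=1$, differentiates the objective with respect to $q_i(\bar z_i)$, and reads off the stationarity condition $q_i = p_i$; only afterwards does it decompose the resulting divergence into $I(\z;\x) + \TC(\z)$. You instead perform both decompositions up front --- first $\E_\x[\KL{p(\z|\x)}{q(\z)}] = I_p(\z;\x) + \KL{p(\z)}{q(\z)}$, then $\KL{p(\z)}{\prod_i q_i} = \TC_p(\z) + \sum_i \KL{p_i}{q_i}$ --- so that the $q$-dependence is isolated in a sum of nonnegative terms that vanish exactly at $q_i = p_i$ by Gibbs' inequality. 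This buys you two things the paper's calculus argument does not deliver for free: a direct certificate that the stationary point is the \emph{global} minimum over the admissible class (the derivative computation alone only establishes stationarity, and second-order or convexity considerations are left implicit), and a derivation that sidesteps the paper's somewhat error-prone bookkeeping (its proof in fact swaps the roles of $p$ and $q$ midway through). The paper's Lagrangian approach, on the other hand, generalizes more mechanically to situations where the optimal $q$ is not guessable in advance. Both arguments are valid for discrete $\z$, which is all the proposition claims.
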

\begin{proof}
To prove the proposition, we just need to minimize with respect to $q$ and substitute back the solution.
Adding a Lagrange multiplier for the constrain $\sum_{z_i} q_i(z_i) = 1$,
the problem can be rewritten as
\begin{align*}
\L(p,q) &= F(p) + \beta \E_{x}\bra{\sum_{\z} p(\z|\x) \log \frac{p(\z|\x)}{q(\z)} d \z } \\
&\phantom{=} {} + \lambda \pa{\sum_{z_i} q_i(z_i) - 1}\\
&= F(p) + \beta \E_{x}\bra{\sum_{\z} p(\z|\x) \log \frac{p(\z|\x)}{\prod_{j=1}^n q_j(z_j)} d \z } \\
&\phantom{=} {} + \lambda \pa{\sum_{z_i} q_i(z_i) - 1}.
\end{align*}
Taking the derivative with respect to to $p_i (\bar z_i)$ we have
\begin{align*}
\der{\L(p,q)}{p_i(\bar z_i)}
&= \beta \E_{x}\bra{\sum_{\z} q(\z|\x) \der{}{p_i(\bar z_i)} \log \frac{q(\z|\x)}{\prod_{j=1}^n p_j(z_j)}} + \lambda\\
&= - \beta \sum_{z_i = \bar z_i} \frac{\E_{x}[q(\z|\x)]}{p_i(\bar z_i)} + \lambda\\
&= - \beta \frac{q(\bar z_i)}{p(\bar z_i)} + \lambda.
\end{align*}
Setting it to zero, we obtain $p(z_i) = q(z_i)$, that is, the
optimal factorized prior is the product of the marginals.
Substituting it back in the second term (the only containing $p$), we obtain
\begin{eqnarray*}
\lefteqn{\E_{x}[\KL{q(\z|\x)}{p(\z)}] = } \\
& & = \E_{x}\bra{\sum_{\z} q(\z|\x) \log \frac{q(\z|\x)}{\prod_{j=1}^n q_j(z_j)}  } \\
& & = \E_{x}\bra{\sum_{\z} q(\z|\x) \pa{\log \frac{q(\z|\x)}{q(\z)} + \log \frac{q(\z)}{\prod_{j=1}^n q_j(z_j)}} } \\
& & = \E_{x}\bra{\sum_{\z} q(\z|\x) \log \frac{q(\z|\x)}{q(\z)}} \\
&&\phantom{=} {} + \sum_{\z} \E_{\x} [q(\z|\x)] \log \frac{q(\z)}{\prod_{j=1}^n q_j(z_j)} \\
& & = I_q(\z;\x) + \KL{q(\z)}{\textstyle \prod_j q_j(z_j)} \\
& & = I_q(\z;\x) + \TC_q(\z).
\end{eqnarray*}
\end{proof}

\section{Additional plots}
\label{sec:plots}

\begin{figure}[h]
\centering
\includegraphics[width=0.88\linewidth]{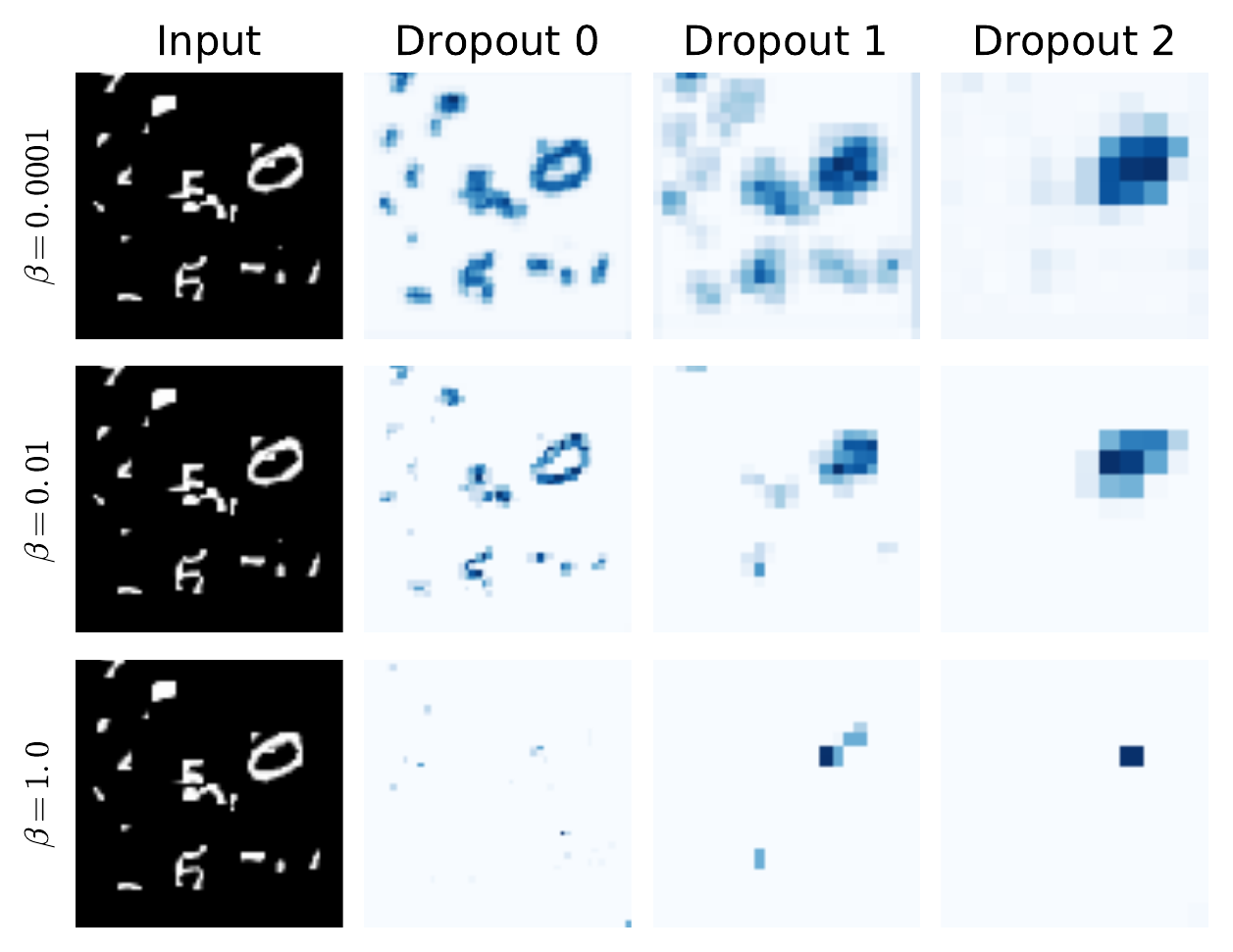}

\includegraphics[width=0.88\linewidth]{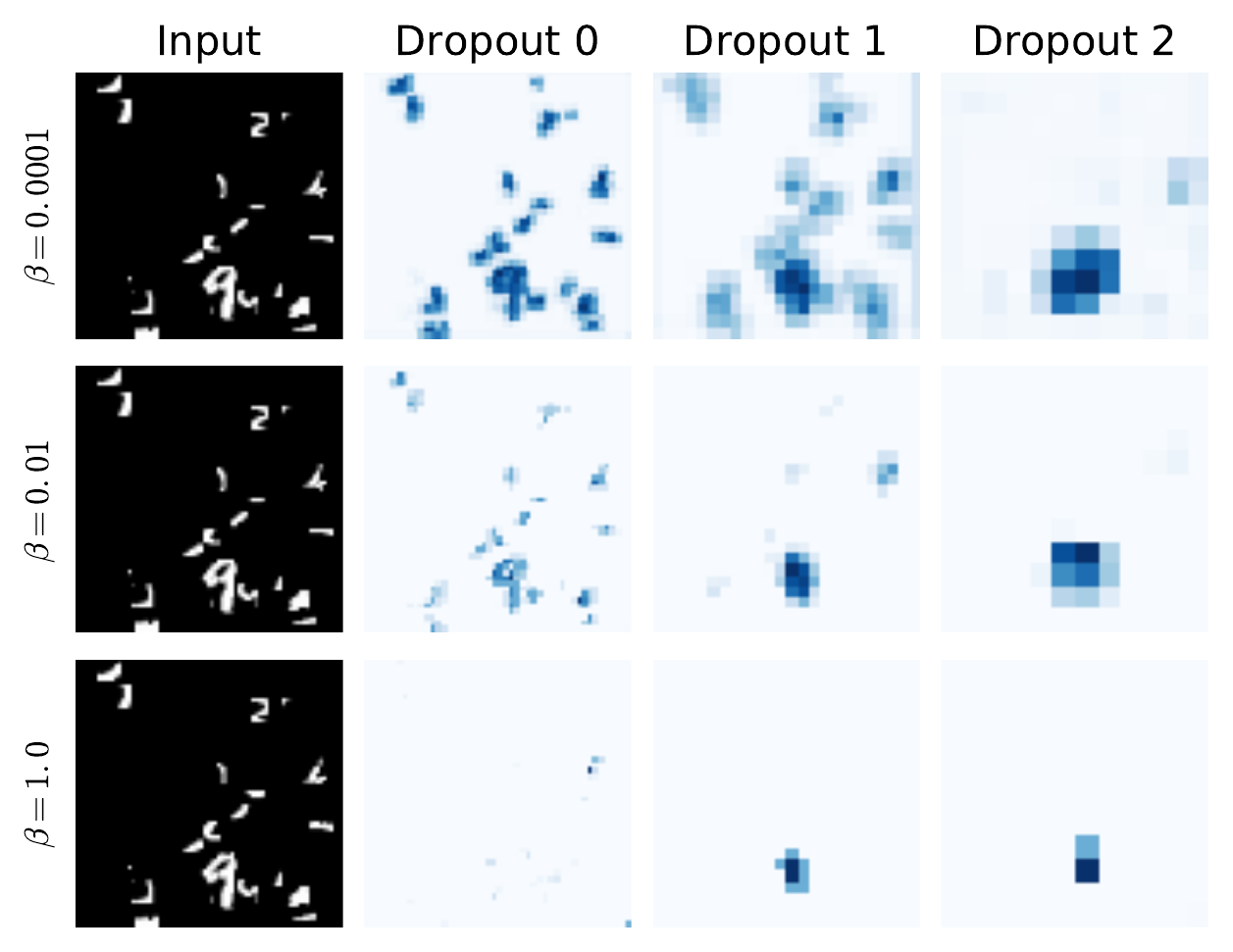}


\caption{For two more input samples, plot of the total KL divergence at each spatial location for the first three dropout layers. See \Cref{sect-experiments} and \Cref{fig:nuisances} for  detailed description. }
\end{figure}

\end{document}